\DeclareMathOperator{\E}{\mathop{\mathbb{E}}}
\newtheorem{theorem}{Theorem}
\title{Free Lunch for Efficient Textual Commonsense Integration in Language Models}
\author{Wanyun Cui \\
  Shanghai University of Finance and Economics \\
  \texttt{cui.wanyun@shufe.edu.cn} \\ \And
  Xingran Chen \\
  University of Michigan \\
  \texttt{chenxran@umich.edu} \\}
\begin{document}

\maketitle

\begin{abstract}


Recent years have witnessed the emergence of textual commonsense knowledge bases, aimed at providing more nuanced and context-rich knowledge. The integration of external commonsense into language models has been shown to be a key enabler in advancing the state-of-the-art for a wide range of NLP tasks. However, incorporating textual commonsense descriptions is computationally expensive, as compared to encoding conventional symbolic knowledge. In this paper, we propose a method to improve its efficiency without modifying the model. We group training samples with similar commonsense descriptions into a single batch, thus reusing the encoded description across multiple samples. One key observation is that the upper bound of batch partitioning can be reduced to the classic {\it graph k-cut problem}. Consequently, we propose a spectral clustering-based algorithm to solve this problem. Extensive experiments illustrate that the proposed batch partitioning approach effectively reduces the computational cost while preserving performance. The efficiency improvement is more pronounced on larger datasets and on devices with more memory capacity, attesting to its practical utility for large-scale applications.

\end{abstract}

\section{Introduction}

While pre-trained language models have made substantial progress in natural language processing, they still lack certain knowledge. Thus it is critical to incorporate external knowledge sources~\cite{peters2019knowledge, zhang2019ernie, logan2019barack}. Previous research has primarily focused on incorporating symbolic knowledge from structured knowledge graphs. 
Recently, realizing the lack of expressiveness and contextualization of symbolic knowledge, many forms of commonsense knowledge bases are constructed, such as if-then knowledge~\cite{sap2019atomic} and discourse knowledge~\cite{fang2021discos}. The integration of such textual commonsense knowledge into language models has been shown to improve the state of the art for various tasks, such as named entity recognition~\cite{wu2020scalable} and commonsense knowledge base completion~\cite{malaviya2020commonsense}.


However, integrating such commonsense knowledge are computationally expensive. Commonsense knowledge in text form requires more complex encoders (e.g. Transformer~\cite{vaswani2017attention}), as opposed to the simple lookup operation for discrete symbolic knowledge. The feed-forward and back-propagation process for the text encoder is significantly more computationally expensive than the standalone symbolic knowledge embeddings. Therefore, it is essential to reduce the computational cost for efficient integration of textual commonsense knowledge, particularly for large-scale applications.

In this paper, we propose a method to accelerate the process of incorporating textual commonsense knowledge into language models. Our approach is based on the observation that if multiple training samples in a mini-batch share the same commonsense description, the encoding for that description can be reused across those samples. In other words, we only need to encode each {\it distinct} description in a mini-batch once. For example, consider the training samples $x_{1\cdots 4}$ and the associated commonsense $t_{1\cdots 4}$ in Fig.~\ref{fig:idea}. In the batch partitioning in Fig.~\ref{fig:no_batch}, the samples in one batch have no shared descriptions, requiring seven times of commonsense encoding for $t_i$. However, in the batch partitioning shown in Fig.~\ref{fig:with_batch}, each description will be encoded only once, resulting in only four times of encoding for $t_i$. The cost of encoding the commonsense is significantly reduced by effective partitioning of the training samples. Therefore, our goal is to group the training samples in such a way as to minimize the total number of distinct commonsense descriptions per mini-batch.

\begin{figure}[tb]
\begin{subfigure}[b]{0.225\textwidth}
	\centering
		\includegraphics[scale=.4]{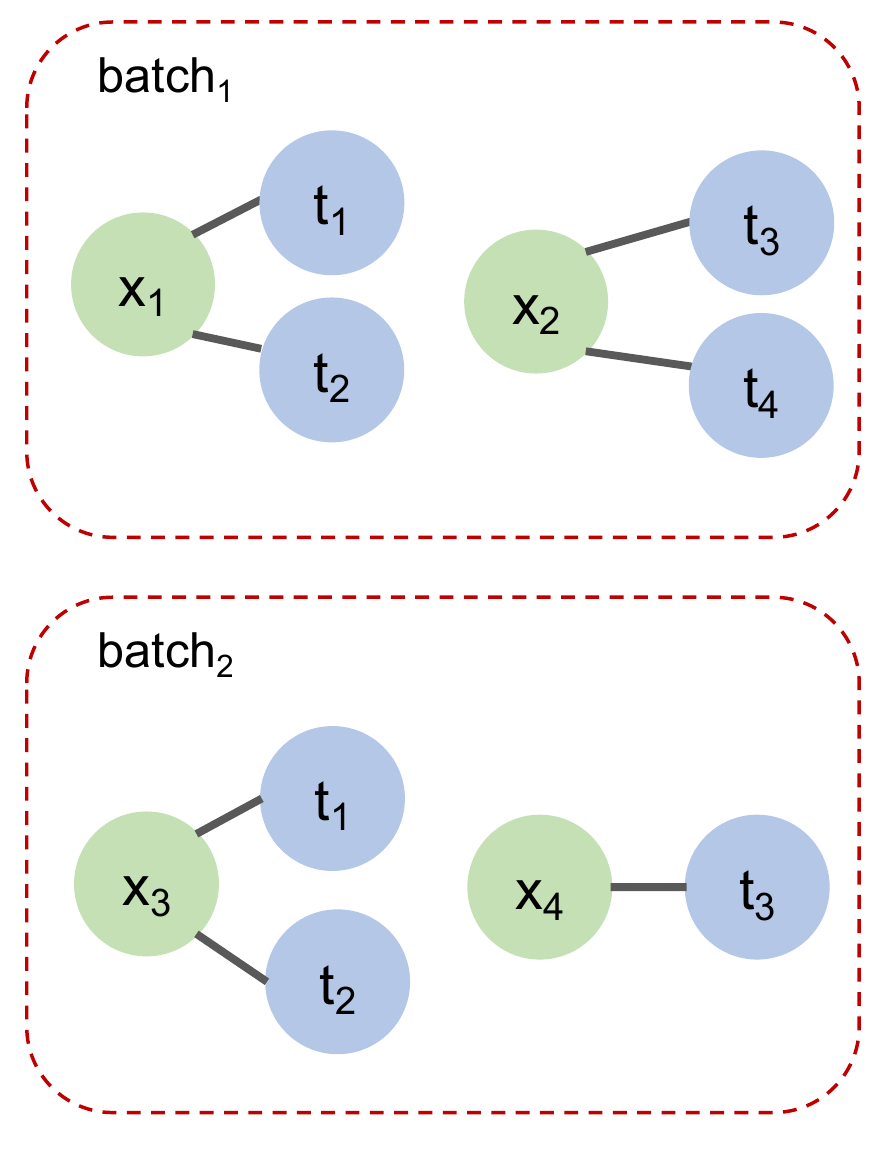}
\caption{If samples are divided randomly into batches, a total of {\bf 7} times of encoding for $t_i$ is required.}
\label{fig:no_batch}
\end{subfigure}
\hspace{0.2cm}
\begin{subfigure}[b]{0.225\textwidth}
	\centering
		\includegraphics[scale=.4]{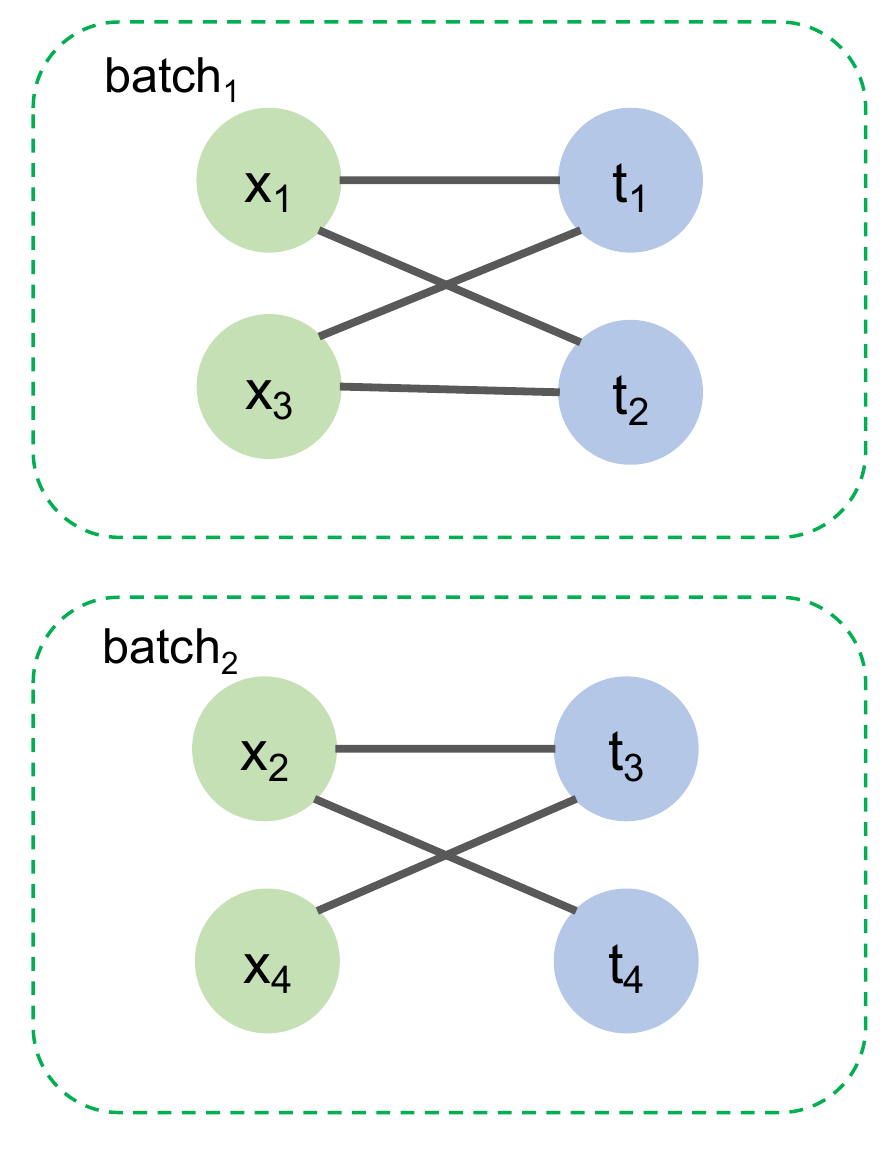}
\caption{If samples are divided delicately, only a total of {\bf 4} times of encoding for $t_i$ is required.}
\label{fig:with_batch}
\end{subfigure}
\caption{Idea of batch partitioning.}
\label{fig:idea}
\end{figure}

To optimize the batch partitioning, we begin by theoretically analyzing the objective (\S\ref{sec:complexity}). Our key observation is that the upper bound of the cost can be reduced to the well-studied {\it graph k-cut} problem~\cite{rangapuram2014tight} (\S~\ref{sec:upper}~\S~\ref{sec:graph_k_cut}). As a result, we minimize the upper bound instead by adapting the classic spectral clustering algorithm (\S~\ref{sec:spectral}). 
The average distinct commonsense descriptions per batch are approximated by the distance to the cluster centroid, and is optimized by spectral clustering. This is also empirically verified (\S~\ref{sec:exp:rationale}).

The main contributions of this paper are as follows: (1) We propose the use of batch partitioning for improving the efficiency of textual commonsense integration for language models. (2) We theoretically demonstrate that the batch partitioning problem can be reduced to the classic graph $k$-cut problem, and we use the well-studied spectral clustering to optimize it. (3) We empirically show that the efficiency of integrating commonsense descriptions can be significantly improved without sacrificing effectiveness. The acceleration is even more pronounced for large-scale training.




\section{The Batch Partitioning Problem}
In this section, we analyze the training efficiency w.r.t. batch partitioning. We first show in \S~\ref{sec:complexity} that the complexity of the model depends on the number of corresponding knowledge descriptions per sample. Then, in \S~\ref{sec:pdef}, we formally define this batch partitioning problem. 

\subsection{Model Setup and Complexity Analysis}
\label{sec:complexity}

In this paper, we use the OK-Transformer~\cite{cui-chen-2022-enhancing} as the backbone. OK-Transformer is a recently proposed model that effectively introduces commonsense knowledge into language models. Traditional approaches for such introduction required pre-training language models on a large corpus along with external commonsense, which was time-consuming~\cite{peters2019knowledge,zhang2019ernie}. The OK-Transformer model, on the other hand, is able to directly incorporate extra knowledge without pre-training. This model utilizes commonsense tokens and attention mechanisms to effectively integrate textual commonsense. Our proposed batch partitioning method is also applicable to other models that encode target sentences and associated commonsense descriptions.

To analyze the computational complexity of encoding commonsense knowledge and formulate the problem, we briefly describe how the original OK-Transformer works. It consists of three Transformers, where $\rm{Transformer^{(1)}}$ is used to represent the target sentence, $\rm{Transformer^{(2)}}$ is used to represent each textual commonsense description, and $\rm{Transformer^{(3)}}$ is used to incorporate commonsense embeddings from $\rm{Transformer^{(2)}}$ into $\rm{Transformer^{(1)}}$.

We now concretely analyze the complexity of integrating external textual commonsense. 
When encoding a sample with associated commonsense descriptions, the complexity consists of three modules:
\begin{itemize}
    \item For encoding the target sentence via $\rm{Transformer^{(1)}}$, the complexity of encoding a sentence of length $L$ into dimension $D$ is $O(L^2D)$.
    \item For encoding textual commonsense descriptions via $\rm{Transformer^{(2)}}$, the complexity of encoding $C$ knowledge descriptions of length $L$ is $O(CL^2D)$.
    \item For integrating the knowledge embeddings into the target sentence via $\rm{Transformer^{(3)}}$, the complexity is $O(C^2D)$.
\end{itemize}

\begin{table}[!htb]
\centering
\begin{tabular}{lc}
\toprule
Module                      & Complexity \\ \hline
Target sentence encoding    &    $O(L^2D)$        \\
External knowledge encoding &    $O(CL^2D)$        \\
Knowledge integration       &    $O(C^2D)$       \\
\bottomrule
\end{tabular}
\caption{Module complexities.}
\label{tab:complexity}
\end{table}

We summarize the complexity in Table~\ref{tab:complexity}. Since in practice we usually have $L^2 \gg C$, the key is is to reduce the complexity of encoding for textual commonsense descriptions, i.e., reduce $O(CL^2D)$.

{\bf Relation to retrieval-based knowledge incorporation} Integrating text commonsense is related to learning dense retrievers for efficiently retrieving and introducing external textual commonsense, such as REALM~\cite{guu2020retrieval}. In commonsense incorporation, each sample only retrieves a small number of knowledge descriptions based on trigger words. So the key of our problem is to efficiently and effectively incorporate certain knowledge descriptions, rather than the information retrieval in dense retrievers. Specifically, dense retrievers typically consist of a retriever and a knowledge-augmented encoder. Our work can be analogous to reducing the cost of the knowledge-augmented encoder.



\subsection{Problem Formulation}
\label{sec:pdef}

We now formulate the problem of batch partitioning. As stated in the introduction, different samples may correspond to the same textual commonsense description. We only need to encode the distinct commonsense descriptions once for a batch of samples. Therefore, the goal of batch partitioning is to minimize the number of distinct commonsense descriptions per batch.


More formally, suppose the training data is $\mathcal{D}_{train}=\{x_i,T(x_i),y_i\}_{i=1}^N$, where $x_i$ is the original sample, $y_i$ is the corresponding label, and $T(x_i)=\{t_{i1},\cdots,t_{ic_i}\}$ is a collection of external knowledge descriptions for $x_i$. For a batch with $s$ samples $x_1, \cdots, x_s$, the number of knowledge descriptions we need to encode is $|\bigcup_{i=1}^s T(x_i)|$.

For convenience, we assume that $N$ is divisible by batch size $s$. To reduce the time complexity, we need to partition $\mathcal{D}_{train}$ into $k=N/s$ batches $B_1,\cdots,B_k$ such that each batch contains $s$ samples and the total number of distinct textual commonsense descriptions in each batch is minimized:
\begin{equation}
\label{eqn:prob}
    \begin{aligned}
    \min & \;\; \sum_{i=1}^k |\bigcup_{x \in B_i} T(x)| \\
    \text{s.t.} & \;\;  |B_i|=s \;\;
    \text{(size constraint for each batch)}
    \end{aligned}
\end{equation}

\section{Solving the Batch Partitioning Problem}
To solve the batch partitioning problem, we first approximate the upper bound of Eq.~\eqref{eqn:prob} in \S~\ref{sec:upper}. We minimize its upper bound instead of directly minimizing Eq.~\eqref{eqn:prob}. In \S~\ref{sec:graph_k_cut}, we show that optimizing the upper bound can be reduced to the classic minimum graph $k$-cut problem, so that some well-studied algorithms can be applied. We show how we adapt the classical spectral clustering to this problem in \S~\ref{sec:spectral}, and how to scale it up in \S~\ref{sec:scale}.

\subsection{Upper Bound Analysis}
\label{sec:upper}
We analyze the upper bound of Eq. (1) in Theorem~\ref{theo:upper}.

\begin{theorem} [Upper bound]
\small
\setlength{\abovedisplayskip}{3pt}
\setlength{\belowdisplayskip}{3pt}
\label{theo:upper}
    \begin{equation}
    \begin{aligned}
        & \quad \sum_{i=1}^k |\bigcup_{x \in B_i} T(x)| \\ 
        & \le \sum_{i=1}^k [\sum_{x \in B_i} |T(x)| - s \E_{x_a,x_b \in B_i,x_a \neq x_b} |T(x_a) \cap T(x_b)|]
    \end{aligned}
    \end{equation}
\end{theorem}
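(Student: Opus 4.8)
The plan is to prove the bound batch by batch. Since both sides are sums over the $k$ batches $B_1,\dots,B_k$, it suffices to establish, for a single batch $B$ of size $s$, that $|\bigcup_{x\in B}T(x)| \le \sum_{x\in B}|T(x)| - s\,\E_{x_a,x_b\in B,\,x_a\neq x_b}|T(x_a)\cap T(x_b)|$; summing over $i$ then yields Theorem~\ref{theo:upper}. The whole argument is therefore about one batch, and I would carry it out by changing the counting variable from \emph{samples} to \emph{descriptions}.

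Concretely, for each distinct description $t\in\bigcup_{x\in B}T(x)$ let $d_t=|\{x\in B:\,t\in T(x)\}|$ be its multiplicity inside the batch, so $1\le d_t\le s$. Counting incidences in two ways gives the three identities I will rely on (all sums over $t$ ranging over the distinct descriptions in $\bigcup_{x\in B}T(x)$): $|\bigcup_{x\in B}T(x)|=\sum_t 1$, $\sum_{x\in B}|T(x)|=\sum_t d_t$, and the key one $\sum_{x_a\neq x_b}|T(x_a)\cap T(x_b)|=\sum_t d_t(d_t-1)$, since a fixed $t$ is counted once for each of the $d_t(d_t-1)$ ordered pairs of distinct samples that both contain it. As there are $s(s-1)$ such ordered pairs in $B$, the averaged term simplifies to $s\,\E|T(x_a)\cap T(x_b)|=\frac{s}{s(s-1)}\sum_t d_t(d_t-1)=\frac{1}{s-1}\sum_t d_t(d_t-1)$.

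With these substitutions the per-batch inequality collapses to a term-by-term comparison. The \emph{exact} overlap being removed on the left is $\sum_{x\in B}|T(x)|-|\bigcup_{x\in B}T(x)|=\sum_t(d_t-1)$, whereas the right-hand side removes $\frac{1}{s-1}\sum_t d_t(d_t-1)=\sum_t(d_t-1)\frac{d_t}{s-1}$. Hence the claim is equivalent to $\sum_t(d_t-1)\bigl(1-\frac{d_t}{s-1}\bigr)\ge 0$, and I would finish by arguing each summand is nonnegative: $(d_t-1)\ge 0$ always, and $1-\frac{d_t}{s-1}\ge 0$ exactly when $d_t\le s-1$. Intuitively, scaling the pairwise intersections by $\tfrac{1}{s-1}$ \emph{under}-counts the true overlap whenever a description is not too popular, which is precisely why the surrogate is an upper bound.

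The step I expect to be the genuine obstacle is this final factor. The per-description summand $(d_t-1)\bigl(1-\tfrac{d_t}{s-1}\bigr)$ is nonnegative only for $d_t\le s-1$ and becomes negative (equal to $-1$) in the degenerate case $d_t=s$, where a single description is shared by \emph{every} sample of the batch. The clean bound therefore holds under the mild assumption that no commonsense description is common to all $s$ samples of a batch, which is generically satisfied at realistic batch sizes. I would accordingly either (i) state this assumption explicitly, or (ii) fold the finitely many $d_t=s$ descriptions into a lower-order correction term; in either case the downstream use of the bound as a surrogate objective for the spectral-clustering reduction in \S\ref{sec:graph_k_cut} is unaffected.
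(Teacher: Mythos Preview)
Your route is genuinely different from the paper's. The paper argues order-wise: it writes $|\bigcup_i T(x_i)|=\sum_i|T(x_i)|-\sum_i|\bigcup_{j<i}T(x_j)\cap T(x_i)|$, relaxes the inner union to a single $\max_{j<i}$, and then averages over all permutations $\pi$ of the batch, using $\E_\pi[\max_j\,\cdot\,]\ge\max_j\E_\pi[\cdot]$ together with the fact that $\E_\pi|T(x_{\pi_j})\cap T(x_{\pi_i})|$ is the pairwise average $c:=\E_{x_a\neq x_b}|T(x_a)\cap T(x_b)|$. Your multiplicity-counting argument, by contrast, rewrites everything in terms of $d_t$ and reduces the per-batch claim to the termwise inequality $(d_t-1)(s-1-d_t)\ge 0$. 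Your approach is more elementary and, crucially, makes the exact regime of validity transparent.

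On that last point you are right, and it is worth stating plainly: the bound as printed (with the factor $s$) is not universally true; your $d_t=s$ case is a genuine counterexample (e.g.\ $s=2$, $T(x_1)=T(x_2)=\{t\}$ gives $1\le 0$). The paper's own derivation has the same slip: in its last displayed step the sum $\sum_{i=1}^s\max_{j<i}\E_\pi[\cdot]$ equals $(s-1)c$, not $sc$, because the $i=1$ term is empty. With the corrected factor $(s-1)$ your termwise inequality becomes $(d_t-1)(s-d_t)\ge 0$, which holds for all $1\le d_t\le s$, so no side assumption is needed and your proof closes cleanly. Either fix you propose (state the no-universal-description assumption, or replace $s$ by $s-1$) is appropriate; the latter is the cleaner one and leaves the downstream reduction to the $k$-cut surrogate in \S\ref{sec:graph_k_cut} unchanged up to a constant.
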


\begin{proof}
For a batch $B$ with $s$ samples $\{x_1,\cdots,x_s\}$, we have:
\begin{equation}
\small
\setlength{\abovedisplayskip}{3pt}
\label{eqn:relaxation}
\begin{aligned}
& \quad |\bigcup_{i=1}^{s}T(x_i)| 
=\sum_{i=1}^{s}|T(x_i)-\bigcup_{j=1}^{i-1}T(x_j)| \\
&=\sum_{i=1}^{s}|T(x_i)-\bigcup_{j=1}^{i-1}T(x_j)\cap T(x_i)| \\
&=\sum_{i=1}^{s}|T(x_i)|-\sum_{i=1}^{s}|\bigcup_{j=1}^{i-1}T(x_j)\cap T(x_i)| \\
&\le\sum_{i=1}^{s}|T(x_i)|-\sum_{i=1}^{s}{\max\limits_{1 \le j \le i-1}|T(x_j)\cap T(x_i)|} \\
\end{aligned}
\end{equation}

The upper bound in Eq.~\eqref{eqn:relaxation} after relaxation is related to the sample order of that batch, while our original objective in Eq.~\eqref{eqn:prob} is actually order-independent. To introduce order-independence, let $\pi$ be an arrangement of $1\cdots s$ that $\pi_i\in\{1,\cdots,s\}$. Noticing that $\sum_{i=1}^{s}|T(x_i)|$ is a constant, based on the order-independence, we transform Eq.~\eqref{eqn:relaxation} into the expectation under different $\pi$s:
\begin{equation}
\begin{aligned}
& \quad \E_\pi\sum_{i=1}^{s}{\max\limits_{1 \le j \le i-1}|T(x_{\pi_j})\cap T(x_{\pi_i})|} \\
& =\sum_{i=1}^{s}{\E_\pi \max\limits_{1 \le j \le i-1}|T(x_{\pi_j})\cap T(x_{\pi_i})|} \\
& \geq\sum_{i=1}^{s}\max\limits_{1 \le j \le i-1} \E_\pi|T(x_{\pi_j})\cap T(x_{\pi_i})| \\
& = s \E_{x_a,x_b \in B_i,x_a \neq x_b} |T(x_a) \cap T(x_b)|
\end{aligned}
\end{equation}

Therefore Theorem~\ref{theo:upper} holds.
\end{proof}

It is worth highlighting that the relaxation in the last inequality of Eq.~\eqref{eqn:relaxation} is valid due to the non-random distribution of words in samples. Specifically, samples with similar meanings tend to have similar word distributions. By grouping similar samples into the same batch, each sample pair within a batch will possess similar textual commonsense knowledge descriptions. This allows us to use the maximal common descriptions between $T(x_i)$ and $T(x_j)$ as an approximation for the common descriptions between $T(x_i)$ and $\bigcup_{j=1}^{i-1}T(x_j)$.

According to Theorem~\ref{theo:upper}, since $\sum_{i=1}^s \sum_{x \in B_i} |T(x)|=\sum_{x \in D_{train}} |T(x)|$ is a constant, minimizing Eq.~\eqref{eqn:prob} is equivalent to maximizing:
\begin{equation}
\label{eqn:max}
    \sum_{i=1}^k \E_{x_a,x_b \in B_i,x_a \neq x_b} |T(x_a) \cap T(x_b)|
\end{equation}
We will show that this is a balanced graph $k$-cut problem in \S~\ref{sec:graph_k_cut}.

\subsection{Connection to the Graph $k$-Cut Problem}
\label{sec:graph_k_cut}

We now illustrate the relationship between Eq.~\eqref{eqn:max} and the graph $k$-cut problem. We demonstrate that, with proper transformation, maximizing Eq.~\eqref{eqn:max} can be reduced to the graph $k$-cut problem. Additionally, in \S~\ref{sec:spectral}, we explain how to incorporate the constraint of the size of each mini-batch using the balanced graph $k$-cut.

Consider constructing a weighted graph $G(V,E)$ as follows:
\begin{itemize}
    \item For each sample $x_i$ in the training data, create a vertex $v_i$.
    \item For each pair of distinct vertices $(v_i,v_j)$, create an edge between them with a weight of $|T(x_i) \cap T(x_j)|$.
\end{itemize}

The graph $k$-cut for $G(V,E)$ partitions $G(V,E)$ into $k$ non-empty components: $V_1,\cdots,V_k$ such that the sum weight of cross-component edges is minimized. 
According to the construction of $G(V,E)$, 
maximizing Eq.~\eqref{eqn:max} is equivalent to minimizing the sum weight of the cut. This is formalized in Theorem~\ref{theo:relation}.

\begin{theorem}[Relation to minimum $k$-cut problem]
\label{theo:relation}
Suppose the weight of the $k$-cut for $G(V,E)$ is $w$, then we have:
\begin{equation}
\label{eqn:upper}
    Eq.~\eqref{eqn:max} = \frac{2}{s(s-1)} \sum_{i=1}^{n-1} \sum_{j=i}^{n} |T(x_i)\cap T(x_j)| - w
\end{equation}
\end{theorem}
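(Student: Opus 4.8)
The plan is to rewrite the expectation in Eq.~\eqref{eqn:max} as an explicit sum over within-batch vertex pairs, and then invoke the elementary partition identity ``total edge weight $=$ within-component weight $+$ cut weight'' to make the cut value $w$ appear.

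First I would expand the expectation. Since each batch $B_i$ corresponds to a component $V_i$ containing exactly $s$ samples, a uniform draw of a distinct unordered pair $\{x_a,x_b\}\subseteq B_i$ has $\binom{s}{2}=s(s-1)/2$ equally likely outcomes, and because $|T(x_a)\cap T(x_b)|$ is symmetric in $a,b$ we obtain
\[
\E_{x_a,x_b\in B_i,\,x_a\neq x_b}|T(x_a)\cap T(x_b)| \;=\; \frac{2}{s(s-1)}\sum_{\{v_a,v_b\}\subseteq V_i}|T(x_a)\cap T(x_b)|.
\]
Summing over the $k$ batches turns Eq.~\eqref{eqn:max} into $\tfrac{2}{s(s-1)}$ times the total weight of all edges whose two endpoints lie in the same component; call this quantity the within-component weight $W_{\mathrm{in}}$.

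Second, I would split the complete-graph edge weight. By the construction of $G(V,E)$, every unordered pair $(v_i,v_j)$ carries weight $|T(x_i)\cap T(x_j)|$, so the total edge weight is $W_{\mathrm{all}}=\sum_{i<j}|T(x_i)\cap T(x_j)|$, i.e.\ the double sum appearing in the statement. Each edge is either internal to some component or crosses two components, and the crossing edges are exactly what the $k$-cut counts; this is where I use that the batches $B_i$ are precisely the cut components $V_i$. Hence $W_{\mathrm{all}}=W_{\mathrm{in}}+w$, so $W_{\mathrm{in}}=W_{\mathrm{all}}-w$, and substituting back yields
\[
\text{Eq.~\eqref{eqn:max}} \;=\; \frac{2}{s(s-1)}\bigl(W_{\mathrm{all}}-w\bigr),
\]
which matches the claimed identity once the normalization attached to the cut weight $w$ is accounted for.

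The only real obstacle here is bookkeeping the constant $\tfrac{2}{s(s-1)}$ correctly: one must fix whether the expectation ranges over ordered or unordered distinct pairs and reconcile that convention with the symmetric edge weight, and then track whether the factor $\tfrac{2}{s(s-1)}$ is meant to multiply the cut term $w$ as well as the total term $W_{\mathrm{all}}$ in the final expression. Beyond this constant-chasing, the argument is a direct application of the partition identity $W_{\mathrm{all}}=W_{\mathrm{in}}+w$, so I would expect the proof to be short.
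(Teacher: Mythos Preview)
Your proposal is correct and mirrors the paper's own proof: both arguments identify the within-component edge weight as $\tfrac{s(s-1)}{2}$ times Eq.~\eqref{eqn:max} and then apply the partition identity $W_{\mathrm{all}} = W_{\mathrm{in}} + w$. Your observation about the normalization on $w$ is well taken---the derivation indeed yields $\tfrac{2}{s(s-1)}(W_{\mathrm{all}}-w)$, so the factor should attach to $w$ as well as to the double sum.
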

\begin{proof}
A k-cut of $G(V,E)$ consists of $k$ components. These $k$ components correspond to $k$ batches in the $k$-partition. Therefore, the sum weight of inner-component edges of the $k$-cut is equal to Eq.~\eqref{eqn:max} $* \frac{s(s-1)}{2}$. Since the total weight of edges in $G(V,E)$ is equal to the sum weight of inner-component edges plus the sum weight of the cut, Theorem~\ref{theo:relation} holds.
\end{proof}
As $\sum_{i=1}^{n-1} \sum_{j=i}^{n} |T(x_i)\cap T(x_j)|$ is a constant for the given training data, Theorem~\ref{theo:relation} shows that maximizing Eq.~\eqref{eqn:max} is equivalent to minimizing the $k$-cut for $G(V,E)$. Thus, we convert the problem of maximizing Eq.~\eqref{eqn:max} into the classic minimum $k$-cut problem.

\subsection{Spectral Clustering for the Balanced $k$-Cut}
\label{sec:spectral}

Based on the analysis in \S~\ref{sec:graph_k_cut}, we propose to use spectral clustering, a widely used approach for solving the minimum graph $k$-cut problem, as our batch partition algorithm. Spectral clustering employs spectral relaxation of the ratio/normalized cut and uses k-means in the embedding of the vertices found by the first $k$ eigenvectors of the graph Laplacian in order to obtain the clustering. In addition to the classic minimum graph $k$-cut problem, we need to incorporate the constraint that each cut/batch must have a size of $s$.

To incorporate the batch size constraint, we make a simple modification to the k-means step in spectral clustering. In the traditional k-means, each node is assigned to the nearest cluster center. In our algorithm, if the nearest cluster center has already been assigned $s$ nodes, the node will be assigned to the nearest center that has fewer than $s$ assigned nodes. The specific spectral clustering algorithm is presented as follows.

\begin{enumerate}[itemindent=0.0em]
\item Compute the spectral embedding $Y \in \mathbb{R}^{n\times k}$ by stacking the normalized first $k$ eigenvectors of $G(V,E)$ in columns as described in~\cite{ng2002spectral}.
\item Treat the $i$-th row of $Y$ as the feature of the $i$-th training point $e_i \in \mathbb{R}^k$.
\item Given an initial set of $k$ means $m_1,\cdots,m_k$ by randomly selecting $k$ nodes as centers, repeat the following two steps until convergence:
\begin{enumerate}[itemindent=0em]
\item {\bf Assignment step} Assign nodes to centers:
\begin{enumerate}[itemindent=0em]
\item Compute distances to centers $dis_{i,j}=distance(e_i,m_j)$, where the Euclidean distance is used.
\item Sort $i,j$ in ascending order of $dis_{i,j}$ for all $1\le i \le n,1 \le j\le k$.
\item Iterate through all $i,j$. If node $i$ is not assigned in this round and center $j$ has less than $s$ assigned nodes, assign node $i$ to center $j$.
\end{enumerate}
\item {\bf Update step} Compute new centers by taking the mean of their assigned nodes.
\end{enumerate}
\end{enumerate}

\subsection{Spectral Clustering at Scale} 
\label{sec:scale}
The above algorithm consists of computation of the eigenvectors, and the use of k-means. K-means is efficient even for large-scale data. However, when $n$ and $k$ are large, the graph construction and eigenvectors computation become computationally expensive. 

To compute the spectral embeddings at scale, high-performance optimization techniques are available such as~\cite{liu2013large,kolev2016note,boutsidis2015spectral,tremblay2016compressive}. Also, in our experiments, a simple trick was found that yields meaningful results: only calculate $k'$-dimensional feature vectors ($k'<k)$ and perform k-means with the $k'$ dimensions. We found that $k'=8$ is a good practice in our experiments.


\section{Related Work}
{\bf Integrating knowledge into language models} has been one of the focuses of language modeling research in recent years. The main integration methods currently include using pre-trained entity embeddings, and constructing knowledge-aware corpora. ERNIE~\cite{zhang2019ernie}, KnowBERT~\cite{peters2019knowledge}, and KGLM~\cite{logan2019barack} are typical methods using pre-trained entity embeddings. ERNIE uses Wikidata~\cite{vrandevcic2014wikidata} as the knowledge base and uses TransE~\cite{bordes2013translating} to encode knowledge. KnowBERT, on the other hand, uses skip-gram like objective~\cite{mikolov2013distributed} based on Wikipedia descriptions as the pre-trained entity embeddings. In addition, KnowBERT adds a loss on entity linking to the pre-trained objective. KGLM~\cite{logan2019barack} allows modification/updating of knowledge by building a local knowledge graph for the target sentence. WKLM~\cite{xiong2019pretrained} constructs a corpus of incorrect knowledge descriptions by replacing Wikipedia's entities with different entities of the same type. It trains the model to identify incorrect and correct knowledge descriptions. Recently, models that integrate textual knowledge have also been proposed.
In this paper, we adopt the model structure in OK-Transformer~\cite{cui-chen-2022-enhancing}.

{\bf Textual knowledge bases} Noting the deficiencies of symbolic knowledge in terms of expressiveness and contextual information representation, some work has started to use text as a form of knowledge. ATOMIC~\cite{sap2019atomic,hwang2020comet} is a large-scale manually annotated common-sense textual knowledge base that includes social interaction, event-centered, physical entity. ATOMIC contains knowledge like {\it (PersonX reaches PersonX's home, Before, PersonX needs to park the car)}. ASER~\cite{zhang2020aser} is an eventuality knowledge graph of activities, states, events, and their relations. Its knowledge atoms are in natural language form, e.g. {\it (I do not have lunch, succession, I am hungry)}. COMET~\cite{bosselut2019comet} is an extension of ATOMIC based on the generative language model. It mainly solves the problem of insufficient coverage of ATOMIC. Some primitive research~\cite{guan2020knowledge,shwartz2020unsupervised} has started to apply these textual knowledge bases in some specific tasks. OK-Transformer~\cite{cui-chen-2022-enhancing} is proposed to integrate textual knowledge for general purposes. However, in our experimental tests, it takes too much time in encoding the commonsense. To our knowledge, there is still a lack of research on how to integrate textual knowledge into general text understanding tasks efficiently.

{\bf Comparison with dense textual knowledge retriever} When introducing external texts, another style is to use a retriever that returns only top $k$ candidate texts in terms of similarity~\cite{chen2017reading,karpukhin2020dense,wang2019multi}. However, this method requires a heavy pre-training process to learn the retriever. On the other hand, for the textual knowledge base we use in this paper, we can directly use the manually labeled trigger words for each knowledge description to retrieve knowledge. Therefore, in this paper, we focus on how to efficiently and effectively integrate knowledge from a textual knowledge base.

{\bf High-performance language models} More general techniques for high-performance language models have also received extensive studies. The main approaches of previous studies include (1) model compression and quantization~\cite{sanh2019distilbert,jacob2018quantization}, and (2) efficient representation of long texts~\cite{kitaev2019reformer,peng2020random}. However, the model compression approaches require heavy pre-training before they can be adapted to language models. Moreover, the techniques for optimizing the efficiency for long text do not have significant effects on short texts~\cite{peng2020random}. Besides, each commonsense description we considered in this paper tends to be short. 
In addition, these works have not considered the characteristics of the knowledge integration problem in this paper, i.e., a training sample corresponds to multiple candidate textual knowledge from the knowledge base.

\begin{table*}[!htb]
\centering
\small
\setlength{\tabcolsep}{3pt}
\begin{tabular}{lccccccccc}
\toprule
\multicolumn{1}{l}{}                           & LM  & Comm.QA                 & PhysicalQA                    & WSC273                       & PDP                        & WinoGrande                    & WinoGender  &   Avg. & Speed-up $\uparrow$                    \\ \hline
BERT                                           & BERT & 55.86 & 68.71 & 66.30 & 85.00 & 51.38 & 68.19 & 65.44 & - \\
Frozen knowledge                       &BERT & 56.43                         & 68.06                         & 65.93                        & 83.33             & 51.30                          & 68.47     & 65.59 &    $\bf 1.4 \times$               \\ 
OK-Transformer                          & BERT & 56.27               & 69.09                & \textbf{67.40}               & \textbf{86.67}             & \textbf{52.64}                & 71.53    &   66.56 & $1.0 \times$        \\
\textbf{Batch Partitioning}                                  &BERT & \textbf{56.59}                & \textbf{69.53}                & 66.67               &  \textbf{86.67}             & 52.17                & \textbf{72.78}  & \textbf{67.40} &      $\bf 1.4 \times$       \\ \midrule
RoBERTa                                        &RoB. & 73.55                         & 79.76                         & 90.10                         & 90.00                         & -                             & 94.60      & 83.95 &  -                  \\
Frozen knowledge & RoB. & 75.02                      &  52.77             & 90.48             & 88.33                      & -                             & \textbf{96.81}    &  80.01 &     $\bf 1.5 \times$             \\
OK-Transformer                            & RoB. & \textbf{75.92}                & 80.09                & \textbf{91.58}               & 90.00                & -                             & 95.00      &      84.75 & $1.0 \times$      \\
\textbf{Batch Partitioning}                           & RoB. & 75.59                & \textbf{80.20}                          & 90.48               & \textbf{91.66}             & -                             & 96.25    &    \textbf{85.14} &  $1.4 \times$      \\
\bottomrule
\end{tabular}
\caption{Results on commonsense reasoning tasks. The effectiveness of batch partitioning surpasses the vanilla BERT/RoBERTa, and is competitive with its upper bound (OK-Transformer). In terms of efficiency, the speed-up of batch partitioning is also competitive to its upper bound (frozen knowledge). RoB. denotes RoBERTa.}
\label{tab:csr}
\end{table*}

\begin{table*}[htb]
\centering
\small
\setlength{\tabcolsep}{6pt}
\begin{tabular}{lccccccccc}
\toprule
\multicolumn{1}{l}{}  & LM                        & MRPC                 & CoLA           & RTE            & QNLI           & STS-B       & SST-2  &   Avg. & Speed-up        \\ \hline
BERT & BERT                                          & 86.52/90.66          & 59.50           & 71.43          & 91.20           & 89.35/88.93 & 91.97   & 82.28 &  -       \\
Frozen knowledge & BERT                       & 87.50/91.28           & 57.31          & 70.76          & 91.71          & 87.31/87.20 & 92.43  & 81.78 &
$\bf 2.3 \times$ \\
OK-Transformer & BERT                              &87.50/91.04  & 58.29          & \textbf{72.20}  & \textbf{91.58} & \textbf{89.82/89.46} & 92.66 & 82.54 & $1.0 \times$ \\
\textbf{Batch Partitioning}   & BERT                               & \textbf{87.99/91.45} & \textbf{61.41} & 71.48          & 91.32 & 89.64/89.19 & \textbf{93.69} & \textbf{83.09} & $2.1 \times$ \\ \midrule
RoBERTa & RoB.                                        & 90.49/93.07          & 66.84          & 86.28          & 93.37          & 91.83/91.95 & 95.64    &  87.86 & -      \\
Frozen knowledge  & RoB. & 89.71/92.61          & \textbf{68.22} & \textbf{87.36} & 94.39          & 90.74/90.47 & 96.10 & 88.19 & $\bf 2.4 \times$ \\ 
OK-Transformer  & RoB.                           & \textbf{91.91/94.24} & 66.89          & 86.28          & \textbf{94.71} & 92.19/92.36 & \textbf{96.44}  & \textbf{88.49} & $1.0 \times$ \\
\textbf{Batch Partitioning}  & RoB.                             & 90.69/93.44 & 67.75          & 85.92          & 94.07 & \textbf{92.41/92.20} & 96.22 & 88.27 & $2.1 \times$ \\
\bottomrule
\end{tabular}
\caption{Results on text classification tasks. Both the effectiveness and the efficiency of batch partitioning are competitive to their upper bounds (OK-Transformer and frozen knowledge). }
\label{tab:glue}
\end{table*}

\section{Experiments}
In this section, we conducted extensive experiments to evaluate batch partitioning. We aim to address the following key questions:
\begin{enumerate}
    \item (\S~\ref{sec:exp:effect}) How much is the efficiency improvement of batch partitioning? Can it improve efficiency without sacrificing effectiveness?
    \item (\S~\ref{sec:exp:scalability}) What is the scalability of batch partitioning as an acceleration method, and can it be applied to large-scale training?
    \item (\S~\ref{sec:exp:rationale}) Is the main theoretical contribution of this paper, i.e., solving the balanced graph-$k$ cut by spectral clustering, consistent with the real datasets?
\end{enumerate}

\subsection{Implementation Details and Setup}

{\bf Textual knowledge base} We follow~\citep{cui-chen-2022-enhancing} to use ATOMIC2020~\cite{hwang2020comet} as the textual knowledge base. Each atom in ATOMIC2020 is commonsense in text form. For each sentence in the downstream task, we retrieve the knowledge associated with it from the textual knowledge base. Note that, unlike retrieving knowledge from free text~\cite{guu2020retrieval}, the textual knowledge base ATOMIC2020 is constructed manually, and each knowledge description has corresponding trigger words. These trigger words are usually verbs or verb phrases. We retrieve related textual commonsense descriptions by keyword-matching of these trigger words.

{\bf Model architecture} We use OK-Transformer~\cite{cui-chen-2022-enhancing} as the backbone of our model. It directly incorporates extra knowledge without pre-training.
OK-Transformer is based on either BERT or RoBERTa. We use OK-Transformer based on BERT by default. We also follow the hyperparameter settings of OK-Transformer. All experiments were run on 8 Nvidia RTX 3090Ti GPUs.

{\bf Datasets} We evaluate batch partitioning via commonsense reasoning and sentence classification. Since the textual knowledge introduced in this paper is commonsense descriptions, we first verify whether the proposed method in this paper could be applied to the commonsense reasoning tasks. To this end, we choose a wide range of commonsense reasoning tasks to conduct the experiments: CommonsenseQA \cite{talmor2019commonsenseqa}, PhysicalQA \cite{bisk2020piqa}, as well as several Winograd Schema Challenge (WSC) datasets including WSC273~\cite{levesque2012winograd}, PDP~\cite{morgenstern2016planning}, WinoGrande~\cite{sakaguchi2019winogrande}, WinoGender~\cite{rudinger2018gender}. Furthermore, for a comprehensive comparison, we also evaluate the efficiency and effectiveness of the proposed batch partitioning method on the text classification benchmark GLUE~\cite{wang2018glue}.

\subsection{Effectiveness and Efficiency}
\label{sec:exp:effect}

{\bf Baselines} To verify the efficiency and effectiveness of batch partitioning, we used the following baselines:
\begin{itemize}
    \item {\bf Vanilla BERT/RoBERTa} without external knowledge. 
    \item {\bf OK-Transformer} To show the efficiency gains of the batch partitioning proposed in this paper, we compare it with the original OK-Transformer. The baseline randomly partitions samples into batches. We consider this baseline as {\bf the lower upper bound of effectiveness} of commonsense integration.
    \item {\bf Frozen knowledge encodings} For a comprehensive comparison, we propose to freeze the encoding of commonsense descriptions during fine-tuning. This approach allows us to introduce external textual commonsense descriptions via embedding lookup with minimal time cost. We consider this baseline as {\bf the upper bound on the efficiency} of commonsense integration.
\end{itemize}

The results of commonsense reasoning and text classification are presented in Table~\ref{tab:csr} and Table~\ref{tab:glue}, respectively. The effectiveness of our batch partitioning approach is demonstrated by its improvement over vanilla language models on both commonsense reasoning and text classification tasks. The effectiveness is comparable or slightly superior to that of OK-Transformer, which serves as the upper bound for effectiveness. In terms of efficiency, our approach significantly accelerates knowledge integration models across a range of tasks. On average, it reduces the time cost for knowledge encoding by 40\% for commonsense reasoning tasks, and 110\% for text classification tasks. This acceleration is close to the frozen knowledge, and serves as the upper bound for efficiency. Overall, our approach is close to its efficiency upper bound without losing effectiveness.

\begin{figure}[tb]
	\centering
		\includegraphics[scale=.4]{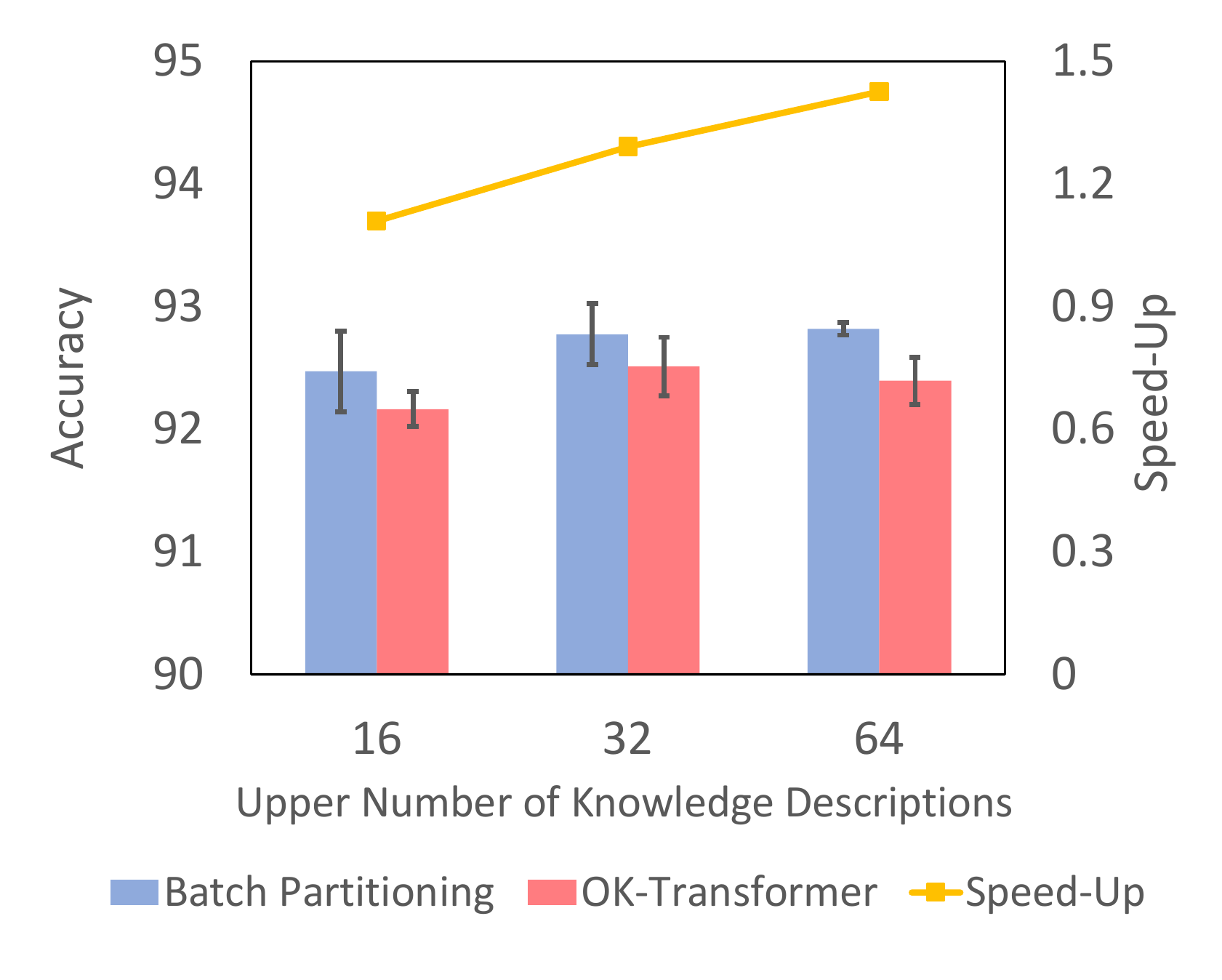}
\caption{The effect of the scale of extra commonsense. We control the scale by limiting the upper number of commonsense descriptions per sample in SST-2.}
\label{fig:commonsense_size}
\end{figure}

\subsection{Scalability for Dataset Sizes, Device Capacities, and Knowledge Sizes}
\label{sec:exp:scalability}

\begin{figure*}[!tb]
\centering
\subfloat[MRPC]{\includegraphics[width=0.25\linewidth]{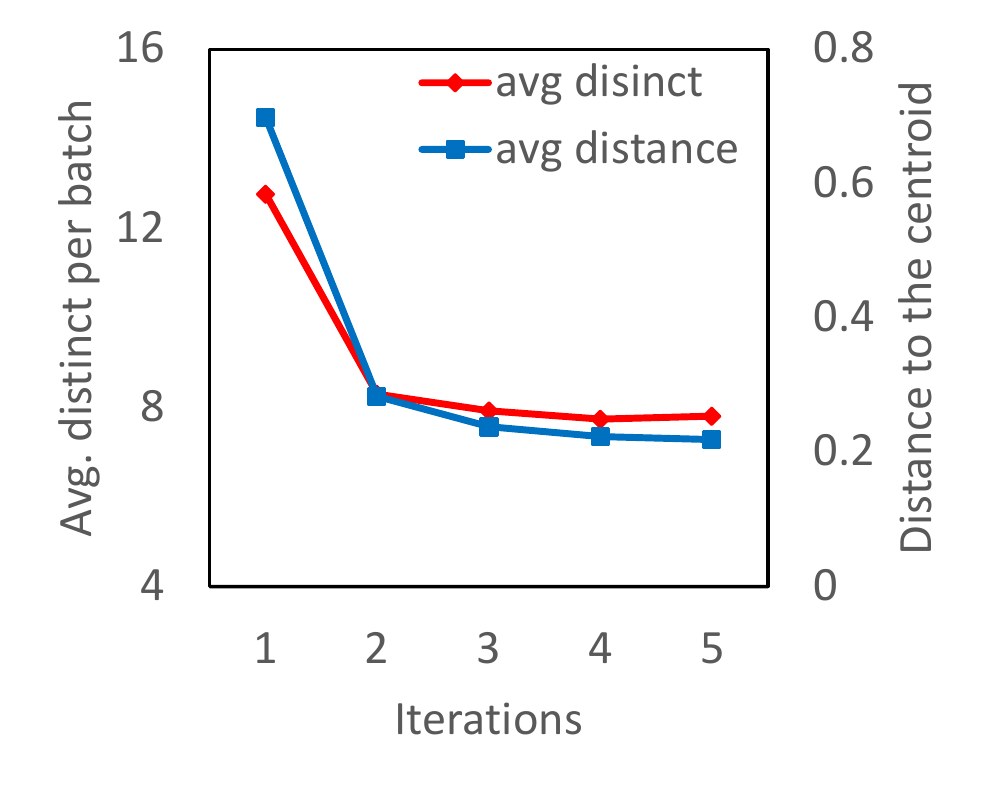}}
\hfill
\subfloat[CoLA]{\includegraphics[width=0.25\linewidth]{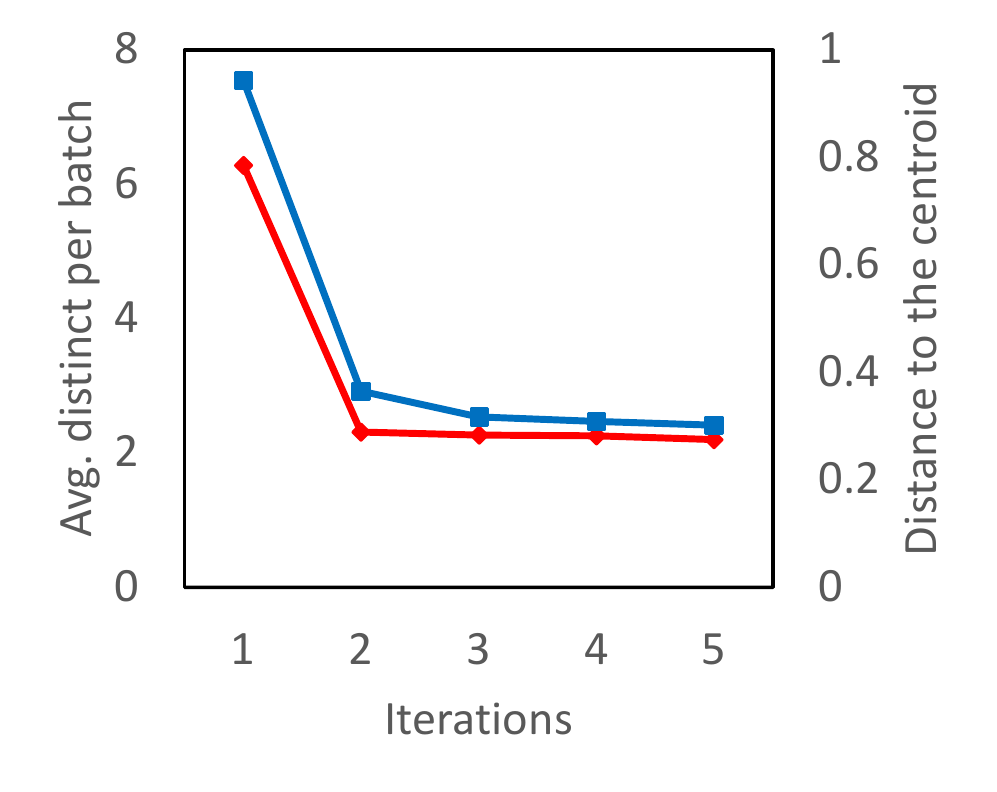}}
\hfill
\subfloat[RTE]{\includegraphics[width=0.25\linewidth]{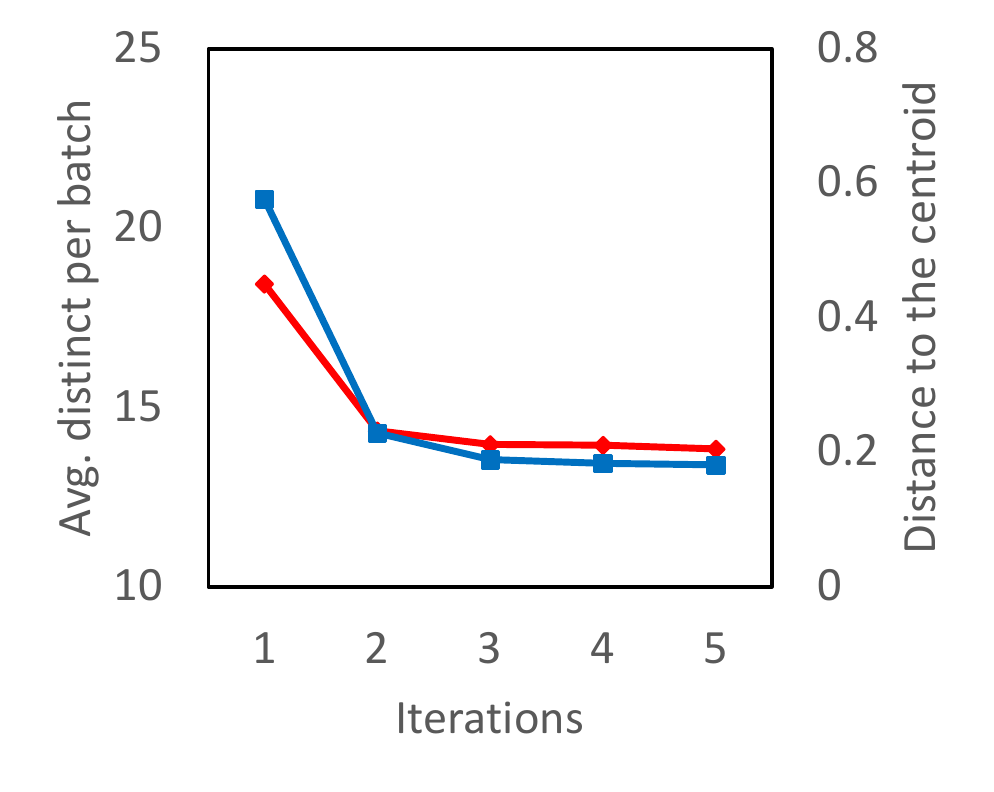}}
\hfill
\subfloat[STS-B]{\includegraphics[width=0.25\linewidth]{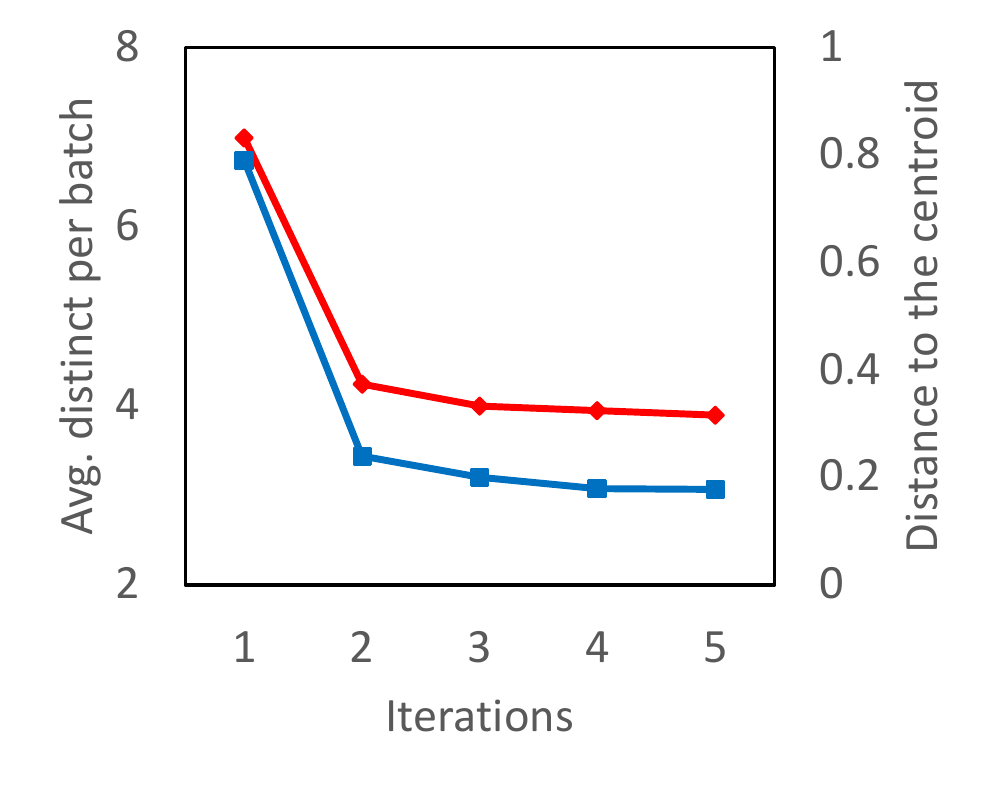}}
\caption{Strong correlation between the distinct commonsense descriptions per batch and the average distance to the centroid during clustering.}
\label{fig:rationale}
\end{figure*}

\begin{figure}[tb]
	\centering
		\includegraphics[scale=.5]{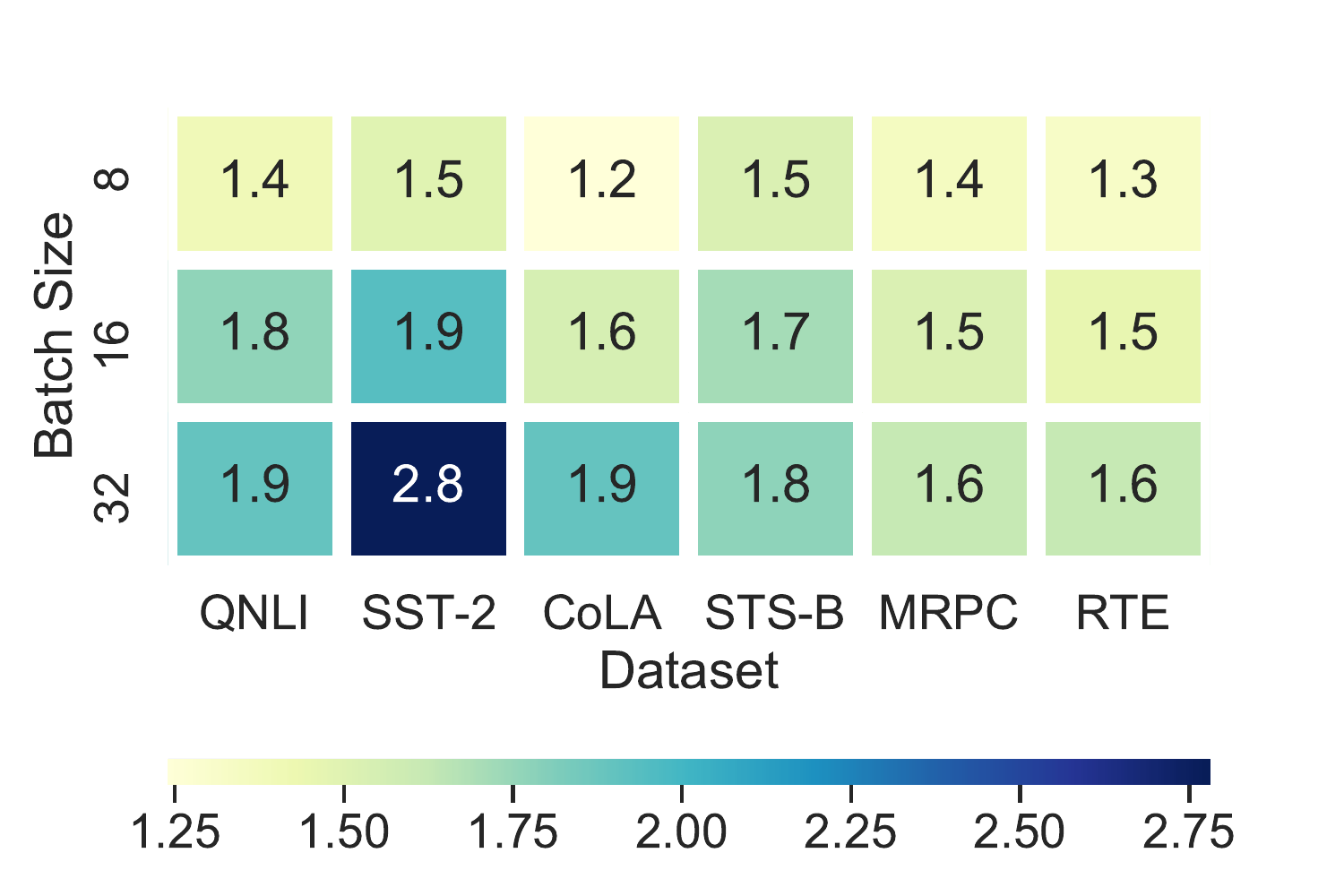}
\caption{Speed-up of proposed batch partitioning method over different training batch size and dataset size. Note that the datasets are arranged in descending order according to dataset size.}
\label{fig:batch_size}
\end{figure}
In this subsection, we investigate the scalability of batch partitioning with different batch sizes, as well as the different dataset sizes. Larger dataset sizes usually mean devices with larger memory.
In particular, we calculated the speedups of knowledge encoding for different batch sizes and different tasks. The results are shown in Fig.~\ref{fig:batch_size}. The datasets are sorted by size in descending order. 

It can be clearly seen that as the size of the dataset rises or the memory of the device rises (larger batch size), the speedup of batch partitioning becomes more significant. This is because, for data-intensive tasks, the knowledge overlapping among different samples is more significant, which increases the feasibility of using batch partitioning. This result verifies the scalability of batch partitioning.
We also investigate the scalability of batch partitioning over different scales of integrated commonsense. To control the scale, we set the upper number of commonsense descriptions for each sample to 16/32/64, respectively, and study the efficiency. Intuitively, richer commonsense descriptions lead to higher effectiveness but more computation cost. The results are shown in Fig.~\ref{fig:commonsense_size}.

As commonsense knowledge becomes richer, the effectiveness and the acceleration both increase. This is because the knowledge overlapping among samples also becomes more significant.
The result verifies that batch partitioning is applicable for incorporating large-scale commonsense knowledge bases.


\subsection{Effect of Spectral Clustering Theory} 
\label{sec:exp:rationale}

In this paper, we propose the use of spectral clustering to solve the batch partitioning problem. We approximate and optimize the distinct number of descriptions per batch in Eq.~\eqref{eqn:prob} by minimizing the distance of each node to the centroid of the cluster in spectral clustering. In this subsection, we demonstrate the rationale behind this approximation by highlighting the strong correlation between the objective of Eq.~\eqref{eqn:prob} and the distance minimization in spectral embeddings.

To this end, we plot how the centroid distance and the distinct descriptions per batch vary at each iteration of the spectral clustering algorithm in Fig.~\ref{fig:rationale}. The results show a strong correlation between the value we directly optimize (i.e., the centroid distance) and the target of the batch partitioning (i.e., distinct descriptions per batch). This supports the feasibility of using spectral clustering to convert the batch partitioning problem into a balanced graph $k$-cut problem and solve it efficiently.
\section{Conclusion}

In this paper, we study how to improve the efficiency of incorporating commonsense knowledge in language models. Due to the high encoding costs of commonsense descriptions, it is crucial to reduce their encoding complexity. Our idea is that by carefully dividing samples with similar descriptions into the same batch, the knowledge encoding utilization can be improved. 

With such an idea, we theoretically analyze the optimization objective of this batch partitioning. We found that the upper bound of this problem can be reduced to the classical graph $k$-cut problem. We propose to use the well-studied spectral clustering algorithm to optimize the batch partitioning. By experimenting with a variety of tasks, we show that the proposed batch partitioning approaches its upper bound in terms of both effectiveness and efficiency. And the method is more applicable for larger datasets and on devices with more capabilities.

\section{Limitations}
The theoretical results and the algorithm should be applicable for other knowledge integration models which encode target sentences and associated textual knowledge descriptions in mini-batches. However, this paper does not extensively apply the proposed method to various knowledge integration models to explore its efficiency and effectiveness. 




\bibliography{fast}
\bibliographystyle{acl_natbib}

\appendix



\end{document}